\newtheorem{definition}{Definition}
\newtheorem{proposition}{Proposition}
\newtheorem{proof}{Proof}
\begin{document}

 \title{Totally Deep Support Vector Machines}
\author{Hichem Sahbi\\
Sorbonne University, CNRS, LIP6\\
F-75005, Paris, France\\
{\tt\small hichem.sahbi@sorbonne-universite.fr}
}

\maketitle

\begin{abstract}
 Support vector machines (SVMs) have been successful in solving many computer vision tasks including image and video category recognition especially for small and mid-scale training problems. The principle of these non-parametric models is to learn hyperplanes that separate data belonging to different classes while maximizing their margins.  However,  SVMs constrain the learned hyperplanes to lie in the span of support vectors, fixed/taken from training data, and this reduces their representational power  and may lead to limited generalization performances.\\      
In this paper, we relax this constraint and allow the support vectors to be learned (instead of being fixed/taken from training data) in order to better fit a given classification task. Our approach, referred to as deep total variation support vector machines, is parametric and relies on a novel deep architecture that learns not only the   SVM and the kernel parameters but also the support vectors, resulting into highly effective   classifiers. We also show (under a particular setting of the activation functions in this deep architecture) that  a large class of kernels and their combinations can be  learned. Experiments conducted on the challenging task of skeleton-based action recognition show the outperformance of our  deep total variation SVMs w.r.t different baselines as well as the related work.  
\end{abstract}

 \section{Introduction}
Support vector machine  (SVM) --- also known as support vector network --- has been a mainstream standard in machine learning for a while \cite{burges1998tutorial,chang2011libsvm,sahbiaccv2010,sahbipr2012,the1,the2,sahbipr2012,the3,the4,sahbicassp11,sahbiicassp16b,the5,sahbiphd,sahbiicassp15,the6,the7,sahbipr19,the8,the9,sahbiigarss16,the10,the11} before deep learning   has been re-popularized by the seminal work of ~\cite{krizhevsky2012imagenet} and  \cite{deng2014deep, girshick2014rich,Goodfellowetal2016,he2016deep,srivastava2015training,szegedy2015going,russakovsky2015imagenet,tristan2017}. The general idea of SVMs \cite{ShaweTaylor2004} is to learn hyperplanes  that separate populations of training data belonging to different classes while maximizing their margin. These models have been successfully applied to different pattern recognition tasks including image category recognition especially  for small or mid-scale training problems  (see for instance \cite{sahbikpca06,the13,sahbicbmi08,the15,TP19,the16,sahbiclef08,the17,sahbijstars17,the18,sahbispie2004,the19,sahbiiccv17,the20,sahbiigarss11,the21,sahbispie05,the22,sahbiicip18,voc2010,Villegas2013,sahbicassp11,sahbiclef13,sahbijmlr06,sahbispie05,sahbiclef08}).  However, the success of SVMs is highly dependent on the appropriate choice of kernels. The latter, defined as symmetric positive semi-definite functions,  should reserve large values to   highly similar content  and vice-versa \cite{sahbicvpr08a,sahbiicip09}. Existing  kernels are either handcrafted (such as gaussian, histogram intersection, etc. \cite{genton2001classes,Weinberger2004,lingsahbieccv2014,lingsahbi2013,lingsahbiicip2014})  or trained using multiple kernels \cite{Qi2007,Lanckriet2004,Rakotomamonjy2008,Sonnenburg2006,Cortes2009} and explicit kernel maps \cite{Rahimi2007, LiIonescu2010,sahbiicpr18,IosifidisPR2016,sahbiicvs13,Vedaldi2012,SanchezPR2018,HuangICASSP2013}
 as well as their deep variants\footnote{In relation to kernel (or similarity) design, siamese networks also aim to learn functions between pairs of data \cite{taigman2014deepface}, but these similarities are not guaranteed to be positive semi-definite and hence cannot always be plugged into kernel-based SVMs for classification.}   \cite{Yu2009nips,Corinna2010,cho2009kernel,jose2013local,mairal2014convolutional,strobl2013deep,wilson2016deep,icassp2017b}. \\ 

\noindent SVMs are basically non-parametric models;   their training consists in solving a constrained quadratic programming (QP) problem \cite{burges1998tutorial} whose parameter set grows as the size of  training data increases\footnote{Each parameter corresponds to a Lagrange multiplier associated to a given training sample.}. The solution of a given QP (the separating hyperplane) is defined in the span of a {\it  subset} of training data known as the {\it support vectors}, i.e., as a linear combination of training  data whose Lagrange multipliers are strictly positive.  Constraining the SVM solution in the span of fixed support vectors\footnote{In this paper, fixed support vectors do not mean independent from the training samples, but taken from a  finite collection of these samples for which the parameters (denoted $\{\alpha_i\}_i$) are non zeros (see later Eq \ref{representer}).} contributes in making the QP  convex  and always convergent to a global solution regardless of its initialization; however, this reduces the representational power of the learned SVMs as only a subclass of possible functions is explored during optimization (i.e., only those defined  in the span of fixed support vectors), and this makes the estimation risk of  SVMs structurally high compared to other models (including deep learning ones  \cite{hanin17}), especially when the support vectors are not sufficiently representative of the actual  distribution of training and test data.  \\ 

In this paper, we introduce an  alternative   learning algorithm referred to as total variation support vector machine (TV SVM) where the support vectors, kernels and their combinations are  {\it  all} allowed to vary  (together with the original SVM parameters)  resulting into more flexible and highly discriminant classifiers.  Our  model is parametric and  based on  a novel deep network architecture that includes three major steps: (i) support vector learning as a part of  individual kernel design, (ii) kernel combination and (iii) SVM parameter learning.  We will show, for  a particular setting of the activation functions in this deep architecture, that a large  class of   kernels (including distance and inner product-based as well as their combinations) can be modeled. The non-convexity of the underlying deep learning problem makes the class of possible  solutions larger compared to the ones obtained using standard (convex   and non-parametric) SVMs. In contrast to the latter, the VC-dimension~\cite{vapnik:slt} of  our parametric TV SVMs is finite\footnote{The VC-dimension is the maximum number of data samples, that can be shattered, whatever their labels.};  according to Vapnik's VC-theory ~\cite{vapnik:transduction}, the finiteness of the VC-dimension avoids loose generalization bounds, reduces the risk of over-fitting and guarantees better performances as also shown in our experiments. \\   

\noindent In this proposed framework,  the learned support vectors  act as kernel parameters and make it possible to map input data to multiple kernel features prior to their classification (as also achieved in \cite{Rahimi2007, LiIonescu2010,IosifidisPR2016,Vedaldi2012,SanchezPR2018,HuangICASSP2013}). Nevertheless,  the proposed framework is conceptually  different from these related methods; the latter consider the support vectors fixed/taken from training data in order to design explicit kernel maps prior to learn parametric SVMs  while in our method, the support vectors are optimized  to better fit the task at hand. Our method is also different from the reduced set technique \cite{burges1997improving} and the deep kernel nets  (for instance \cite{cho2009kernel,jose2013local,strobl2013deep,wilson2016deep,icassp2017b}). Indeed, the latter operate on precomputed kernel inputs (gram matrices) while the former aims at  reducing the complexity of pre-trained nonlinear SVM classifiers  using a reduced set of virtual support vectors obtained by minimizing a least squares criterion between the initial and newly generated hyperplane classifiers. Furthermore, the reduced set technique assumes that the initial SVMs are already pre-trained (which could be intractable for large scale problems); besides, the newly generated classifiers could be  biased especially when the original pre-trained SVMs are highly complex and nonlinear.  Finally, resulting from the parametric setting of our approach, its  training and testing   complexity is  a priori controllable.   \\ 

The rest of this paper is organized as follows; section \ref{section2}  reminds  the general formulation of SVMs and their  deep  extensions using   kernel networks. Section  \ref{section3} introduces our main contribution;  a total variation SVM that makes it possible to learn not only kernels and SVMs but also the support vectors. Section  \ref{section4} shows the validity of our method through extensive experiments  on the challenging task of skeleton-based action recognition. Section  \ref{section5} concludes the paper while providing possible extensions for a future work.

\def\A{{\bf A}}
\def\Ak{{\bf A}^k}
\def\I{{\bf I}}
\def\X{{\bf X}}
\def\B{{\bf B}}
\def\K{{\bf K}}
\def\U{{\bf U}}
\def\W{{\bf W}}

\def\S{{\cal S}}
\def\N{{\cal N}}

\def\G{{\cal G}}
\def\V{{\cal V}}
\def\E{{\cal E}}
\def\F{{\cal F}}
\def\y{{\bf y}}
\def\x{{\bf x}}
\def\z{{\bf z}}

\section{Deep   SVM networks} \label{section2}
Define ${\cal X} \subseteq \mathbb{R}^p$ as an input space corresponding to all the possible data (e.g., images or videos)   and let ${\cal S}=\{\mathbf{x}_1,\dots,\mathbf{x}_{n},\dots,\mathbf{x}_{n+m}\}$ be a finite subset of $\cal X$ with an arbitrary order. This order is defined so only the first $n$ labels of $\cal S$, denoted $\{\y_1,\dots,\y_{n}\}$,  are known (with $\y_i \in \{-1,+1\}$).
\subsection{Hinge loss max-margin inference at a glance} 

Max margin inference aims at building a decision function $f$ that predicts a label $\y$ for any given input data $\mathbf{x}$; this function is trained on ${\cal L}=\{\mathbf{x}_1,\dots,\mathbf{x}_{n}\}$ and used in order to infer labels on ${\cal U}={\cal S} \backslash {\cal L}$.  In the max-margin classification \cite{vapnik:slt}, we consider ${\phi}$ as a mapping of the input data (in $\cal X$) into a high dimensional space $\mathcal{H}$. The dimension of  $\cal H$  is usually sufficiently large (possibly infinite) in order to guarantee linear separability of data.

\indent Assuming data linearly separable in $\mathcal{H}$, the hinge loss max-margin learning finds a hyperplane $f$ (with a normal $\mathbf{w}$ and shift $b$) that separates $n$ training samples $\{(\mathbf{x}_i,\y_i)\}_{i=1}^{n}$ while maximizing their margin. The margin is defined as twice the distance between the closest training samples w.r.t $f$ and the optimal  $(\hat{\mathbf{w}},\hat{b})$  corresponds to 
\begin{equation}
\begin{array}{cl}
\underset{\mathbf{w},b}{\text{argmin}} &\displaystyle   {\frac{1}{2}}\left\Vert \mathbf{w}\right\Vert_2 ^{2} + C \sum_{i=1}^n \ell(\y_i,f(\x_i)),
\end{array}
\label{eq:hard-margin}
\end{equation}
\noindent which is the primal form of the max-margin support vector machine, $\|.\|_2^2$ is the $\ell_2$-norm, $C>0$ and $\ell(\y_i,f(\x_i))$ is the hinge loss function defined as $\max(0,1-\y_i f(\x_i))$; a differentiable (and also convex) surrogate of this loss is used in practice and defined as $\log(1+\exp(.))$. Given $\mathbf{x}_i \in {\cal U}$, the class of $\mathbf{x}_i$ in $\{-1,+1\}$ is decided by the sign of $f(\mathbf{x}_i)  = {\bf w}'{{\phi}(\mathbf{x}_i)} +b$ with ${\bf w}'$ being the transpose of ${\bf w}$. Following the kernel trick and the representer theorem~\cite{vapnik:slt}, one  may write the solution $\bf w$ of the above problem as 
\begin{equation}\label{representer}
  \begin{array}{l} 
    {\bf w}=\displaystyle \sum_{j=1}^n \alpha_j \y_j \phi( \mathbf{x}_j), 
    \end{array} 
\end{equation} 
hence  $f(\mathbf{x}_i)$ can also be expressed  as  $\sum_{j=1}^{n} \alpha_j \y_j\kappa(\mathbf{x}_i,\mathbf{x}_j) + b$, where $\alpha=(\alpha_1\dots \alpha_{n})'$ is a vector of positive real-valued training parameters found as the solution of the following dual problem
\begin{equation}
\begin{array}{lll}
  \displaystyle & \displaystyle \min_{\alpha \geq 0,b} &  \displaystyle   \frac{1}{2} \sum_{i,j=1}^n \alpha_i \alpha_j  \y_i \y_j \kappa(\mathbf{x}_i,\mathbf{x}_j) \\ 
                                                                                   & & \ \ \ \ +   C  \displaystyle \sum_{i=1}^n \ell(\y_i,\sum_{j=1}^n \alpha_j \y_j\kappa(\mathbf{x}_i,\mathbf{x}_j) + b), 
\end{array}
\label{eq:dual}
\end{equation}
here  $\kappa(.,.) = \langle {\phi}(.),{\phi}(.) \rangle$ is a  symmetric and positive (semi-definite)  kernel function \cite{scholkopf:learnwithkernels,ShaweTaylor2004}. The closed form of $\kappa$ is defined among a collection of existing elementary (a.k.a individual) kernels including linear, gaussian and histogram intersection and the underlying mapping ${\phi}(x) \in {\cal H}$ is usually {\it implicit}, i.e., it does exist but it is not necessarily known and may be infinite dimensional. We consider in the remainder of this paper  an approach that learns better kernels;   the latter are    {\it deep} and designed in order to i) guarantee linear separability of data in $\cal L$, ii) to ensure better generalization performance using deep networks and iii) to ensure positive definiteness by construction (see subsequent sections and also appendix).

\def\w{{\bf \beta}}
\subsection{Deep multiple kernels}\label{deepmkl} 
We aim to learn an implicit mapping function that recursively characterizes a nonlinear and deep combination of multiple elementary kernels \cite{mklimage2017}. For each layer $l \in \{2,\dots,L\}$ and its associated unit $p$, a kernel domain $\big\{ {\kappa}_{p}^{(l)}(\cdot,\cdot) \big\}$ is recursively defined as
\begin{equation}\label{eq0}
{\kappa}_{p}^{l}(\cdot,\cdot) = g \big(\sum_q \w_{q,p}^{l-1} \
\kappa_q^{l-1}(\cdot,\cdot) \big),
\end{equation}
\noindent where $g$ is a nonlinear activation function chosen in order to guarantee the positive semi-definiteness of the learned deep kernels (see more details in appendix). In the above equation, $q \in \{1,\dots,n_{l-1}\}$, $n_{l-1}$ is the number of units in layer $(l-1)$ and $\{\w^{l-1}_{q,p}\}_q$ are the (learned) weights associated to kernel ${\kappa}_{p}^{l}$. In particular, $\{{\kappa}_{p}^{1}\}_p$ are the input elementary kernels including linear and histogram intersection kernels, etc. When $L=2$, the architecture is shallow, and it is equivalent to the 2-layer MKL of Zhuang et al.~\cite{Zhuang2011a}. For larger values of $L$, the network becomes deep.
 \begin{figure}[tb]
 \begin{minipage}[b]{1.0\linewidth}
   \centering
   \includegraphics[width=6cm]{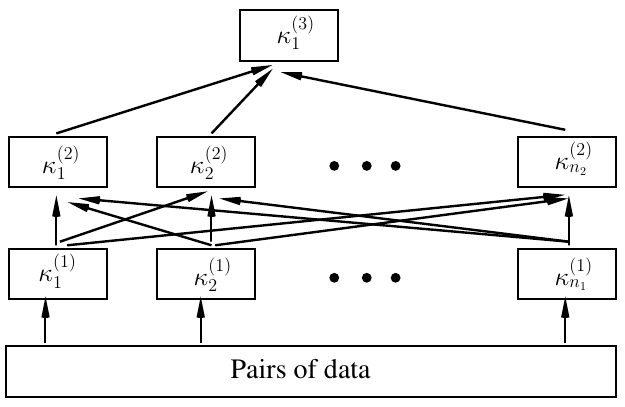}
 \end{minipage}
 \caption{ The deep kernel learning architecture with two  intermediate layers and an output layer. The input of this network corresponds to different elementary kernels evaluated on a given pair of data.} \label{framework}
 \end{figure}
We notice that the deep kernel network in essence is a multi-layer perceptron (MLP), with nonlinear activation functions (see Fig. \ref{framework}). The difference is that the last layer is not designed for classification, rather than to deliver a similarity value. However, we can use standard back-propagation algorithm specific for MLP to optimize the weights in the deep kernel network. Considering the output kernel $\kappa_1^L$ (and its parameters  $\w$),  a slight variant (denoted as $J(\alpha,{\w})$) of the objective function in Eq.~\ref{eq:dual} is defined as 
\begin{equation}
\begin{array}{lll}
  \displaystyle & \displaystyle \min_{\alpha \geq 0,b,\w} &
 \displaystyle   \frac{1}{2} \sum_{i,j=1}^n \alpha_i \alpha_j  \y_i \y_j \kappa_1^{L}(\mathbf{x}_i,\mathbf{x}_j) \\ 
      \displaystyle                 &  & \ \ \ \ +  \displaystyle  C  \sum_{i=1}^n \ell(\y_i,\sum_{j=1}^n \alpha_j \y_j \kappa_1^{L}(\mathbf{x}_i,\mathbf{x}_j) + b) \\
                & \textrm{s.t.}   & \displaystyle \sum_{q=1}^{n_{l-1}} \w_{q,p}^{l-1}  = 1, \ \ \ \w_{q,p}^{l-1} \geq 0,  \\
                 &  &   l \in \{2,\dots, L\}, \ \ \ p \in \{ 1,\dots,n_l\}.    
\end{array}
\label{eq:dual2}
\end{equation}

Note that this objective function is now optimized w.r.t both $\alpha$ and $\w$ (the parameters of the deep multiple kernels). Assuming that the computation of gradients of the objective function $J$ w.r.t the output kernel $\kappa^{(L)}_1$ (i.e.~$\frac{\partial J}{\partial \kappa^{L}_1(.,.)}$) is tractable; according to the chain rule, the corresponding gradients w.r.t coefficients $\w$ are computed, and then used to update these weights using gradient descent (see also extra details in section \ref{Opt}). Note also that the above problem is not convex anymore (w.r.t $\alpha$, $\beta$ when taken jointly), however, releasing convexity makes it possible to explore a larger set of possible solutions resulting into a better estimator as also discussed subsequently and also in experiments. 
\begin{figure*}[hpbt]
\resizebox{1.02\textwidth}{!}{\input{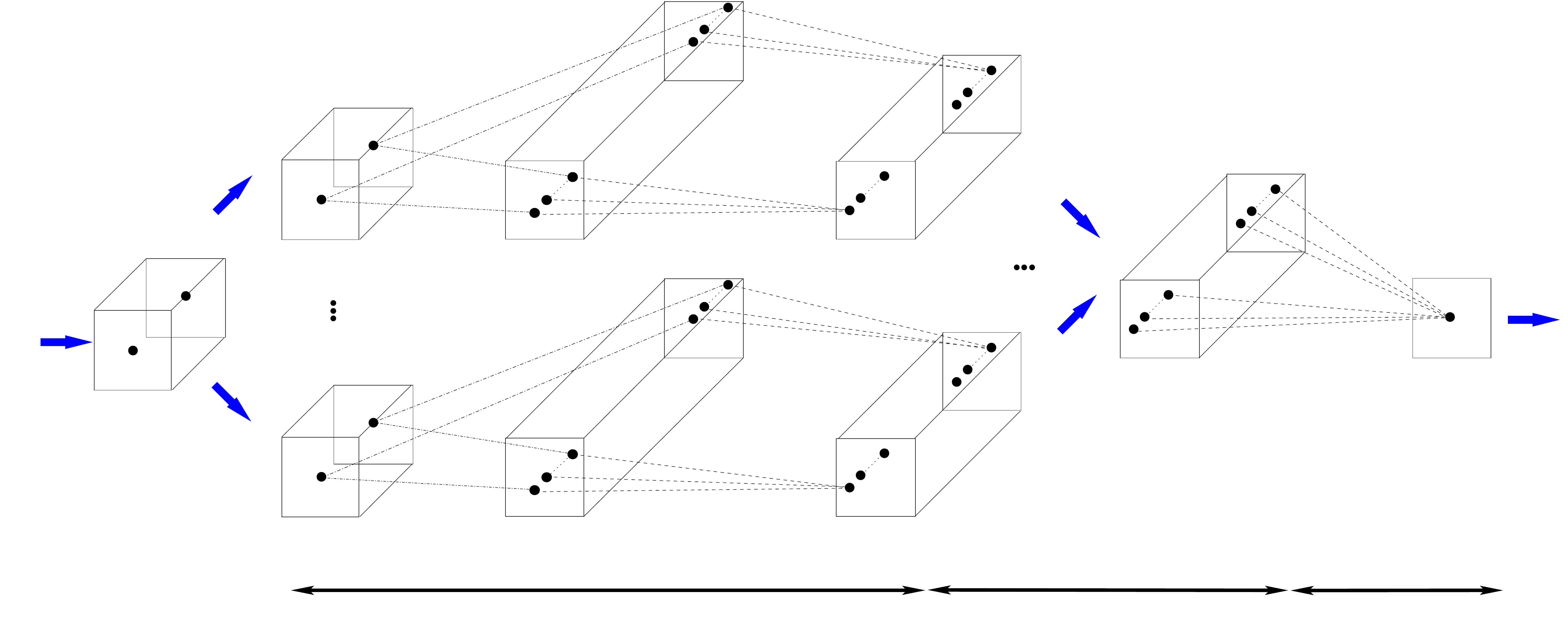_t}}
\caption{This figure shows the architecture of our deep net including individual and deep kernel evaluations as well as SVM classification.  The first layer is fed with the input vector $\x$ (whose dimension is $D$) which is transferred to different branches;   each one  corresponds to one individual kernel (linear, polynomial, histogram intersection, etc.). In the second layer, the $\sigma_1$ activation is first applied to all the dimensions of $\x$, then each dimension of the resulting activated signal $\sigma_1(\x)$ is multiplied, in the third layer, by the $N$ (re-parameterized) weights $\{\sigma_4({\cal Z})\}$ (as shown in Eq~\ref{expansion0}) prior to apply the $\sigma_2$ activation; here $N$ corresponds to the number of virtual support  vectors in ${\cal Z}$ and these weights (or virtual support vectors) are shared through different branches (i.e., individual kernels). In the fourth layer, the results of the previous one are pooled across dimensions resulting into $N$ kernel values. Note that these $N$ kernel values are activated by $\sigma_3()$ and fed to the deep multiple kernel MLP,  in the fifth layer, resulting into $N$ deep multiple kernel values which are linearly combined in the last layer for classification (as shown in Eq.~\ref{eq:decision}). } \label{fig11}
\end{figure*}

\section{Deep total variation SVM networks}\label{section3}

Considering the primal form in Eq.~\ref{eq:hard-margin} (and Eq.~\ref{representer}); for a given $N$ (targeted number of   support vectors), we define a more general class of solutions as
\begin{equation}\label{representer2}
  {\bf w}=\sum_{j=1}^N \alpha_j \phi( \mathbf{z}_j).
\end{equation}
In the above form, ${\bf w}$ is not written in the span of the fixed training set $\cal L$ but in the span of a more general set ${\cal Z}=\{\z_j\}_{j=1}^N \subset \mathbb{R}^p$, referred to as {\it virtual support vectors}, which varies together with $\{\alpha_j\}_j$. As the labels of  $\cal Z$ are unknown, these labels are implicitly embedded into $\{\alpha_j\}_j$, so the latter are not constrained to be positive anymore. With this more general setting of ${\bf w}$ (i.e., $\{\alpha_j\}_j$ and  $\{\z_j\}_j$), one may define a larger set of possible solutions, and thereby obtain a {\it better} universal estimator; at least because the set of solutions defined by Eq.~\ref{representer} is included in Eq.~\ref{representer2} (while the converse is not true). \\

\indent Considering this variant of  ${\bf w}$, the objective function (\ref{eq:dual2}) can be rewritten as
\begin{equation}
\begin{array}{lll}
  \displaystyle & \displaystyle \min_{\alpha,b,{\cal Z},\w} &
 \displaystyle   \frac{1}{2} \sum_{i,j=1}^N \alpha_i \alpha_j  \kappa_1^{L}(\z_i,\z_j) \\ 
     &  & \displaystyle \ \ \ \ + C  \sum_{i=1}^n \ell(\y_i,\sum_{j=1}^N \alpha_j \kappa_1^{L}(\mathbf{x}_i,\z_j) + b) \\
                & \textrm{s.t.}   & \displaystyle \sum_{q=1}^{n_{l-1}} \w_{q,p}^{l-1}  = 1, \ \ \ \w_{q,p}^{l-1} \geq 0,  \\
                 &  &  l \in \{2,\dots, L\}, \ \ \ p \in \{ 1,\dots,n_l\}, 
\end{array}
\label{eq:dual3}
\end{equation}
and the decision function becomes
\begin{equation}
  \begin{array}{lll}
    f(\x_i) =  \displaystyle \sum_{j=1}^N \alpha_j \kappa_1^L(\x_i,\z_j) + b. 
\end{array}
\label{eq:decision}
\end{equation}

The left-hand side term of the  objective function (\ref{eq:dual3}) acts as a regularizer (which is now totally independent from the training set $\cal L$) while the right-hand side term  still corresponds to the hinge loss. With this new constrained minimization problem all the parameters are allowed to vary including the virtual support vectors $\cal Z$, together with $\alpha$ and $b$ as well as the mixing parameters (in $\beta$) of the deep multiple kernels. In contrast to standard non-parametric SVMs (as well as their multiple kernel variants), this formulation is totally parametric, which means, that the decision function (once trained on a given $\cal L$) is defined using a fixed-length set of trained parameters $\cal Z$, $\beta$, $\alpha$ and $b$. Note that the complexity of evaluating (\ref{eq:decision}) scales linearly w.r.t the size of  $\cal S$ while for standard (non-parametric) SVM this complexity is quadratic. As shown in the following section, one may consider a deep net architecture in order to effectively and efficiently train and evaluate the model in Eq.~\ref{eq:decision}.  In the remainder of this paper, this model will be referred to as {\it total variation SVM}; as shown later in experiments, this model is highly flexible and shows superior performances compared to individual and multi-kernel SVMs as well as their deep variants. 

\begin{table*}
  \begin{center}
    \resizebox{1.02\textwidth}{!}{
  \begin{tabular}{cc||c|cccc}
      & & $k(\x,\z)$ &  $\sigma_1(t)$ & $\sigma_2(t)$ & $\sigma_3(t)$ & $\sigma_4(t)$  \\
    \hline
    \hline 
     \multirow{4}{*}{\rotatebox{45}{\tiny Inner product based}} &  Linear  & $\langle \x,\z \rangle$ & $t$  & $t$  & $t$  & $t$    \\  
    & Polynomial  &$\langle \x,\z \rangle^p$ & $t$ &  $t$  & $t^p$ & $t$  \\  
    &Sigmoid & $\frac{1}{1+\exp(-\beta \langle \x,\z\rangle )}$ & $t$ & $t$  & $\frac{1}{1+\exp(-\beta t)}$ &  $t$  \\  
    &tanh  &$\tanh (a \langle \x,\z \rangle + b$) & $t$ & $t$  & $\tanh(a t + b)$ &  $t$  \\ 

  \hline
    \multirow{7}{*}{\rotatebox{45}{\tiny Distance based}}  &  Gaussian  & $\exp(-\beta \|\x-\z \|^2)$ & $\exp(t)$  & $\log(t)^2$& $\exp(-\beta t)$ & $\exp(-t)$  \\  
   & Laplacian  &$\exp(-\beta \|\x-\z \|)$& $\exp(t)$  & $\log(t)^2$& $\exp(-\beta \sqrt{t})$ & $\exp(-t)$  \\ 
    &Power & $-\|\x-\z\|^p$ &$\exp(t)$  & $\log(t)^2$& $-t^{p/2}$ & $\exp(-t)$  \\ 
    &Multi-quadratic & $\sqrt{\|\x-\z\|^2+b^2}$ & $\exp(t)$  & $\log(t)^2$ & $\sqrt{t+b^2}$ &  $\exp(-t)$   \\ 
    & Inverse Multi-quadratic  &$\frac{1}{\sqrt{\|\x-\z\|^2+b^2}}$ & $\exp(t)$  & $\log(t)^2$ & $\frac{1}{\sqrt{t+b^2}}$  &   $\exp(-t)$  \\ 
    &Log & $-\log(\|\x-\z\|^p+1)$  & $\exp(t)$  & $\log(t)^2$ & $-\log(t^{p/2}+1)$ &  $\exp(-t)$     \\ 
    & Cauchy  & $\frac{1}{1+\frac{\|\x-\z\|^2}{\sigma^2}}$  & $\exp(t)$  & $\log(t)^2$ & $\frac{1}{1+\frac{t}{\sigma^2}}$ &  $\exp(-t)$   \\ 
    \hline
   &  Histogram intersection & $\sum_d \min(\x_{.,d},\z_{.,d})$ & $\exp(\exp(-\beta(1-t)))$& $\frac{1}{\beta} \log(\log(t))+1$ & $t$  & $\sigma_1(t)$
  \end{tabular}}
\end{center}
\caption{This table shows the setting of $\sigma_1$, $\sigma_2$, $\sigma_3$, $\sigma_4$ for different kernel functions. Note that the best parameters of these individual kernels are set using cross validation.} \label{taxi}
\end{table*}

\subsection{Neural consistency and   architecture design}

In contrast to decision functions defined with the primal parameters $\bf w$, the one in Eq.~\ref{eq:decision} cannot be straightforwardly evaluated using standard neural units\footnote{i.e., those based on standard perceptron (inner product operators) followed by nonlinear activations.} as input kernels $\{\kappa_q^1\}$ in Eq.~\ref{eq0} may have general forms. Hence, modeling Eq.~\ref{eq:decision} requires a careful design; our goal in this paper, is not to change the definition of neural units, but instead to adapt Eq.~\ref{eq:decision} in order to make it consistent with the usual definition of neural units. In what follows, we introduce the overall architecture associated to the decision function $f(.)$ (and also $\kappa_1^L$) for different input kernels $\{\kappa_q^1\}_q$ including linear, polynomial, gaussian and histogram intersection as well as a more general class of kernels (see for instance \cite{cortes2009learning,maji2012efficient}).
\def\x{{\bf x}}
\def\y{{\bf y}}
\begin{definition}[Neural consistency] Let $\x_{.,d}$ (resp. $\z_{.,d}$) denote the $d^{th}$ dimension of a vector $\x$ (resp. $\z$). For a given (fixed or learned) $\z$, a kernel $\kappa$ is referred to as ``neural-consistent'' if
\begin{equation}\label{expansion0} 
  \kappa(\x,\z) = \sigma_3\big(\sum_d \sigma_2(\sigma_1(\x_{.,d}).\omega_{d})\big),
  \end{equation} 
with   $\omega_{d}=\sigma_4(\z_{.,d})$ and being  $\sigma_1$, $\sigma_2$, $\sigma_3$, $\sigma_4$ any arbitrary real-valued activation functions.   
\end{definition}
Considering the above definition, it is easy to see that deep kernels defined in Eq.~\ref{eq0} are neural consistent provided that their input kernels are also neural consistent; the latter include (i) the linear $\langle \x,\z\rangle$, (ii) the polynomial $\langle \x,\z\rangle^p$, (iii) the hyperbolic tangent  $\tanh \langle \x,\z\rangle$, (iv) the gaussian $\exp(-\beta \|\x-\z\|^2)$ and (v) the histogram intersection $\sum_d \min(\x_{.,d},\z_{.,d})$. Neural consistency is straightforward for inner product-based kernels (namely linear, polynomial and tanh) while for shift invariant kernels such as the gaussian, one may write $\exp(-\beta \|\x-\z\|_2^2) = \sigma_3\big(\sum_d \sigma_2(\sigma_1(\x_{.,d}).\omega_d)\big)$ with $\sigma_1(.)=\exp(.)$, $\sigma_2(.) = \log(.)^2$, $\sigma_3(.)= \exp(-\beta (.))$ and $\omega_d=\exp(-\z_{.,d})$. For histogram intersection, it is easy to see that $\sum_d \min(\x_{.,d},\z_{.,d})  =  \sum_d 1-\max(1-\x_{.,d},1-\z_{.,d})$ and one may obtain (for  a sufficiently large $\beta$) $\sum_d 1-\max(1-\x_{.,d},1-\z_{.,d}) \approx \sigma_3\big(\sum_d \sigma_2(\sigma_1(\x_{.,d}).\omega_d)\big)$ using 
$\sigma_1(.)=\exp(\exp(-\beta(1-(.))))$, $\sigma_2(.) = \frac{1}{\beta} \log(\log(.))+1$, $\sigma_3(.)= (.)$ and $\omega_d=\sigma_1(\z_{.,d})$. \\
Following  the above example, neural consistent kernels (including linear, polynomial, gaussian, histogram intersection) can be expressed using the deep net architecture shown in Fig.~\ref{fig11}. Neural consistency can be extended to other shift invariant kernels including:  multi-quadratic, inverse multi-quadratic, power, log, Cauchy,  Laplacian, etc. (see for instance \cite{genton2001classes} for a taxonomy of the widely used functions in kernel machines; see also table \ref{taxi} for the setting of $\sigma_1$, $\sigma_2$, $\sigma_3$, $\sigma_4$ for different kernels).

\begin{figure*}[hpbt]
  \begin{center}
    \centerline{\scalebox{0.39}{\input{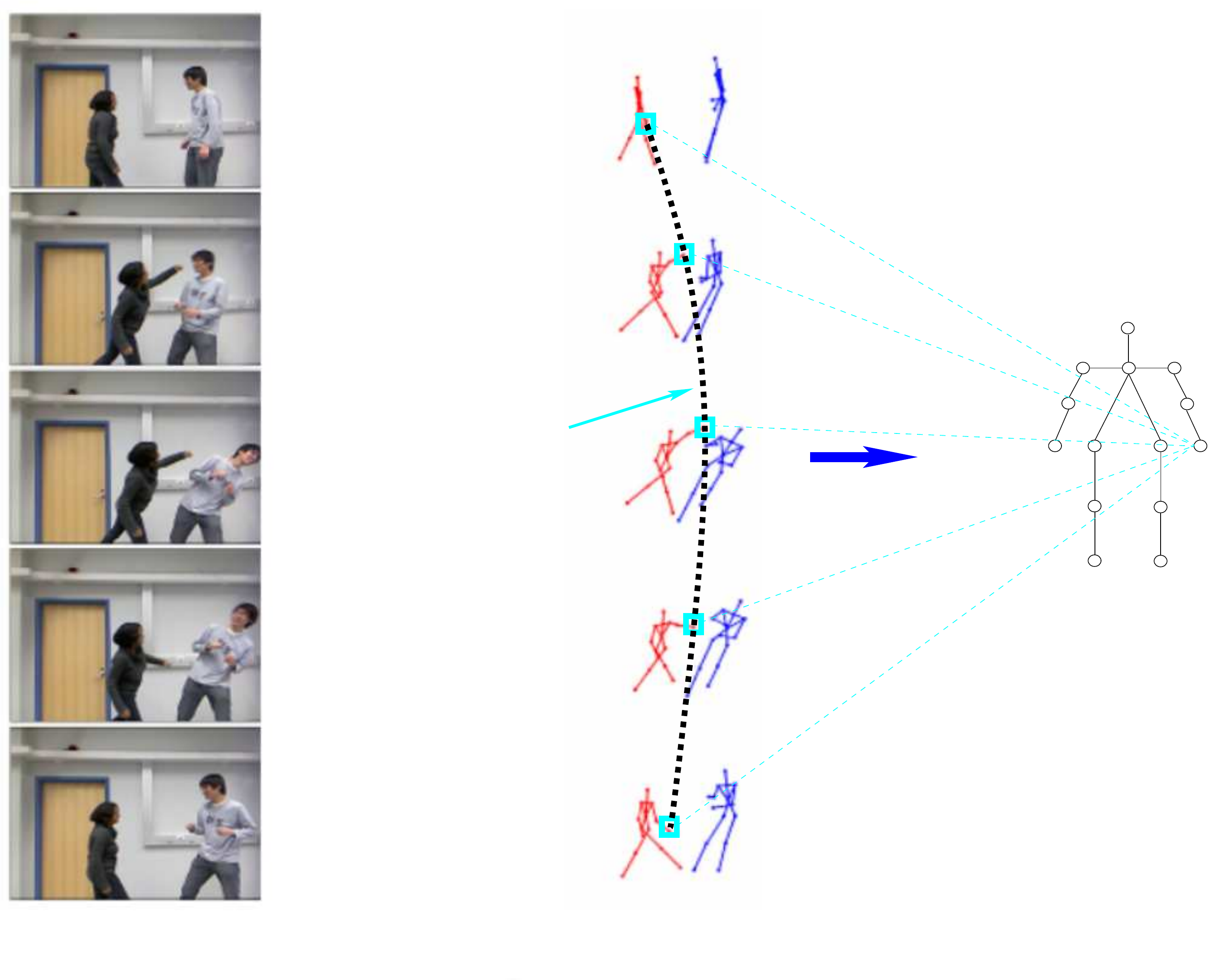_t}}}
\caption{This figure shows the whole keypoint  tracking and description process on the motion stream.} \label{fig1}
\end{center}
\end{figure*}

\def\betaa{{\hat{\w}}}
 \begin{figure}[tb]
 \begin{minipage}[b]{1\linewidth}
   \centering
   \includegraphics[width=9cm]{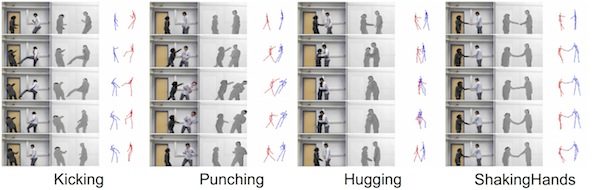}
 \end{minipage}
 \caption{This figure shows a sample of videos and skeletons associated to different action categories taken from the SBU dataset; \newline see {\scriptsize https://www3.cs.stonybrook.edu/$\sim$kyun/research/kinect\_interaction/index.html}.} \label{examples}
 \end{figure}

\subsection{Optimization}\label{Opt}
All the parameters of the network (including the virtual support vectors, the mixing parameters of the multiple kernels and the weights of the SVMs) are learned end-to-end using back-propagation and stochastic gradient descent. However, as the mixing parameters $\beta$ are constrained, we consider a slight variant in order to implement these constraints.\\ 
\indent From proposition~\ref{prop0} (see appendix), provided that i) the input kernels are conditionally positive definite (c.p.d), ii) the activation function $g$ preserves the c.p.d (as leaky-ReLU\footnote{This function is used in practice as it preserves negative kernel values (in contrast to ReLU).}) and iii) the weights  $\{\w_{q,p}^{l-1}\}$ (written for short as $\{\w_{q,p}^{l}\}$ in the remainder of this section)  are positive, all the resulting multiple kernels in Eq.~\ref{eq0} will also be c.p.d and admit equivalent positive definite kernels (following proposition~\ref{prop1} in appendix), and thereby max-margin SVM can be achieved. Note that conditions (i) and (ii) are satisfied by construction while condition (iii) requires adding equality and inequality constraints to Eq.~\ref{eq0}, i.e., $\w_{q,p}^{l} \in [0,1]$ and $\sum_{q}  \w_{q,p}^{l}=1$.  In order to implement these constraints, we consider a re-parametrization in Eq.~\ref{eq0} as $\w_{q,p}^{l}= h(\betaa_{q,p}^{l})\slash {\sum_q h(\betaa_{q,p}^{l})}$ for some $\{\hat{\w}_{q,p}^{l}\}$
with $h$ being strictly monotonic real-valued (positive) function and this allows free settings of the parameters  $\{\hat{\w}_{q,p}^{l}\}$ during optimization while guaranteeing $\w_{q,p}^{l} \in [0,1]$ and $\sum_{q}  \w_{q,p}^{l}=1$. During back-propagation, the gradient of the loss $J$ (now w.r.t $\betaa$'s) is updated using the chain rule as
\begin{equation}
  \begin{array}{lll}
 \displaystyle   \frac{\partial J}{\partial \betaa_{q,p}^{\l}} &=& \displaystyle \frac{\partial J}{\partial \w_{q,p}^{l}} . \frac{\partial \w_{q,p}^{l}}{\partial \betaa_{q,p}^{l}}  \\ 
              \ \ \ \ \textrm{with}   & & \ \ \ \displaystyle  \frac{\partial \w_{q,p}^{l}}{\partial \betaa_{q,p}^{\l}} = \displaystyle \frac{h'(\betaa_{q,p}^{l})h(\sum_{r\neq q } \betaa_{r,p}^{l})}{(h( \betaa_{q,p}^{l})+h(\sum_{r\neq q} \betaa_{r,p}^{l}))^2},
                                                                                                                                                                                         \end{array}
                                                                                                                                                                                           \end{equation} 
\noindent in practice $h(.)=\exp(.)$ and   $\frac{\partial J}{\partial \w_{q,p}^{l}}$ is obtained from layerwise gradient backpropagation (as already integrated in standard deep learning tools including PyTorch and TensorFlow).  Hence,  $\frac{\partial J}{\partial \betaa_{q,p}^{l}}$ is obtained by multiplying the original gradient $\frac{\partial J}{\partial {\w}_{q,p}^{l}}$ by $\frac{\exp(\sum_{r} \betaa_{r,p}^{l})}{(\exp( \betaa_{q,p}^{l})+\exp(\sum_{r\neq q} \betaa_{r,p}^{l}))^2}$. 

\section{Experiments}\label{section4}

We evaluate the performance of our total variation SVM on the challenging task of action recognition, using the   SBU kinect dataset \cite{SBU12}. The latter is an interaction dataset acquired  using the Microsoft kinect sensor; it includes in total 282 video sequences belonging to 8 categories:  ``approaching'', ``departing'', ``pushing'', ``kicking'', ``punching'', ``exchanging objects'', ``hugging'', and ``hand shaking'' with variable duration, viewpoint   changes and    interacting individuals (see examples in  Fig. \ref{examples}). In all these experiments, we use the same evaluation protocol as the one suggested in \cite{SBU12} (i.e., train-test split) and we report the average accuracy over all the classes of actions.

\begin{table*}
  \begin{center}
    \resizebox{0.99\textwidth}{!}{
  \begin{tabular}{c||c|cccccc}
\backslashbox{kernels}{classifiers}  & Standard SVM with   SVs &  \multicolumn{6}{c}{Total variation SVM w.r.t different \# of learned SVs} \\
   &    fixed/taken from the training set &  10& 50 & 100 & 150 & 200& 250 \\
    \hline
    \hline
    Linear  &   81.5385  &90.7692  & 92.3077 & 92.3077 & 93.8462 & 89.2308  & 92.3077 \\       
    Polynomial  &84.6154 &89.2308 & 92.3077 &92.3077 & 90.7692 &90.7692 &  90.7692 \\    
    Sigmoid   & 83.0769 &92.3077 & 90.7692 &92.3077 & 92.3077&92.3077  & 92.3077 \\
    tanh   &  83.0769 & 92.3077 & 93.8462  &  90.7692 & 92.3077 & 93.8462 & 93.8462 \\
    \hline
    Gaussian  &  86.1538 & 90.7692 &  92.3077 & 92.3077  & 92.3077 & 92.3077 &  92.3077\\
    Laplacian  &  84.6154 & 83.0769 &80.0000 &  81.5385 & 87.6923 & 92.3077 & 86.1538   \\
    Power & 86.1538 & 81.5385 &92.3077 &93.8462 & 95.3846 & 95.3846 & 95.3846 \\
    Multi-quadratic & 86.1538 &81.5385 & 90.7692 & 93.8462 & 95.3846 &93.8462 & 95.3846  \\
    Inverse Multi-quadratic & 83.0769 & 90.7692 & 93.8462 & 93.8462 & 93.8462 & 93.8462& 93.8462  \\
    Log &  87.6923 & 78.4615 & 89.2308 & 92.3077 & 93.8462 & 95.3846&  95.3846 \\
    Cauchy  & 86.1538 &92.3077 &  93.8462  & 93.8462 & 93.8462 &  93.8462& 93.8462 \\
    \hline
    Histogram Intersection & 86.1538  & 92.3077 & 92.3077 &  93.8462  & 93.8462 & 93.8462 & 95.3846 \\
    \hline
    \hline
      MKL (one layer)  & 89.2308 & 80.0000 &95.3846 &  93.8462&95.3846 & 93.8462 &  93.8462 \\ 
     MKL (two layers)  & 87.6923 &90.7692 & 95.3846 &  93.8462& {\bf 96.9231} & 95.3846 &  95.3846 \\
     MKL (three layers)  & 89.2308 & 93.8462 &95.3846  &95.3846  & 93.8462      &95.3846 &  95.3846                                                                                                                                         
  \end{tabular}}
  \end{center}  
  \caption{This table shows a comparison of our TV SVM against standard SVM with different individual and multiple kernels.}\label{table1}
\end{table*}

\subsection{Video skeleton description}\label{graphc}
 \indent Given a  video $\V$ in SBU as a sequence of skeletons, each keypoint in these skeletons defines a labeled trajectory through successive frames (see Fig.~\ref{fig1}).   Considering a finite collection of trajectories $\{v_i\}_i$ in $\V$, we process each trajectory  using {\it temporal chunking}: first we split the total duration of a  video into $M$ equally-sized temporal chunks ($M=4$ in practice), then we assign  the keypoint  coordinates of  a given trajectory $v_i$  to the $M$ chunks (depending on their time stamps) prior to concatenate the averages of these chunks and this produces the description of $v_i$ denoted as $\psi(v_i)$ and the  final description of a given video $\V$ is $(\psi(v_1) \ \psi(v_2) \dots)$ following the same   order through trajectories. Hence, two trajectories $v_i$ and $v_j$,  with similar keypoint coordinates but arranged differently in time, will be considered as very different. Note that beside  being compact and discriminant, this temporal chunking gathers advantages --  while discarding drawbacks -- of two widely used families of techniques mainly {\it global averaging techniques} (invariant but less discriminant)  and  {\it frame resampling techniques} (discriminant but less invariant). Put differently, temporal chunking produces discriminant descriptions that preserve the temporal structure of trajectories while being {\it frame-rate} and {\it duration} agnostic.

\subsection{Performances and comparison} 

\indent  We trained our TV SVM  for 1000 epochs  with a batch size equal to $50$ and we set the learning rate (denoted $\nu$)  iteratively inversely proportional to the speed of change of the objective function in Eq. \ref{eq:dual3}; when this speed increases (resp. decreases),   $\nu$  decreases as $\nu \leftarrow \nu \times 0.99$ (resp. increases as $\nu \leftarrow \nu \slash 0.99$). All these experiments are run on a GeForce GTX 1070 GPU  device (with 8 GB memory) and no data augmentation is achieved.  Table~\ref{table1}  shows a comparison of action recognition performances, using TV SVM against different baselines involving individual kernels  with    support vectors (SVs) fixed/taken from the training set. We also show the results using deep multiple kernel learning (MKL).  \\ 

From all these results, we observe a clear and a consistent gain of TV SVM w.r.t all the individual kernel settings and their MKL combinations; this gain is further amplified when using deep MKL with only two layers.  However, these performances  stabilize  as  the depth of MKL increases since  the size of the training set is limited compared to the large number of training parameters in its underlying MLP.  These  performances also improve  quickly as the number of learned (virtual) support vectors $N$ increases, and this results from the flexibility of TV SVM which learns --- with few virtual support vectors ---  relevant representatives  of training and test data.  These performances   consistently improve for all the individual kernels (as well as their MKL combinations) and this is again explained by the modeling capacity of TV SVM. Indeed, the latter captures better the actual decision boundary while standard SVM (even when combined with MKL) is clearly limited when the fixed support vectors are biased (i.e., not sufficiently representative of the actual  distribution of  the data especially on small or mid-scale problems); hence, learning the MKL+SVM parameters (with fixed SVs) is not enough in order to recover from this bias.  In sum, the gain of TV SVM results from the {\it complementary aspects of the used individual kernels and also the modeling capacity of SVMs when the support vectors are allowed to vary}.\\

\noindent  Finally, we compare the classification performances   of our TV SVM against other related methods in action recognition  (on SBU) ranging from sequence based such as LSTM and GRU \cite{DeepGRU,GCALSTM,STALSTM} to deep graph (no-vectorial) methods based on spatial and spectral convolution \cite{kipf17,SGCCONV19,Bresson16}. From the results in table \ref{compare},  TV SVM brings a substantial gain w.r.t state of the art methods, and provides comparable results with the best vectorial methods on SBU (see also \cite{sahbibmvc2019}).

 \begin{table}[!htb]
  \begin{center}
\resizebox{0.35\linewidth}{!}{
\begin{adjustbox}{angle=-90}
\setlength\tabcolsep{2.4pt}
  \begin{tabular}{c||ccccccccccccccccccc}
   \rotatebox{90}{Perfs} &     \rotatebox{90}{90.00} &  \rotatebox{90}{96.00} &  \rotatebox{90}{94.00}&  \rotatebox{90}{96.00}&   \rotatebox{90}{49.7 }&  \rotatebox{90}{80.3 }&  \rotatebox{90}{86.9 }&  \rotatebox{90}{83.9 }&  \rotatebox{90}{80.35 }&  \rotatebox{90}{90.41}&   \rotatebox{90}{93.3 } &  \rotatebox{90}{90.5}&   \rotatebox{90}{91.51}&  \rotatebox{90}{94.9}&  \rotatebox{90}{97.2}&  \rotatebox{90}{95.7}&  \rotatebox{90}{93.7 } &  \rotatebox{90}{{96.92} }\\  
     &  &  &  &  &  &  &  &  &  &  &        &  &  &  &  &  &  &  &     \\
     \rotatebox{90}{Methods} &    \rotatebox{90}{ GCNConv \cite{kipf17}} & \rotatebox{90}{ArmaConv \cite{ARMACONV19}} & \rotatebox{90}{ SGCConv \cite{SGCCONV19}} & \rotatebox{90}{ ChebyNet \cite{Bresson16}}& \rotatebox{90}{  Raw coordinates  \cite{SBU12}} & \rotatebox{90}{Joint features \cite{SBU12}} & \rotatebox{90}{Interact Pose \cite{InteractPose}} & \rotatebox{90}{CHARM \cite{CHARM15}} & \rotatebox{90}{ HBRNN-L \cite{HBRNNL15}} & \rotatebox{90}{Co-occurence LSTM \cite{CoOccurence16}} & \rotatebox{90}{ ST-LSTM \cite{STLSTM16}}  & \rotatebox{90}{ Topological pose ordering\cite{velocity2}} & \rotatebox{90}{ STA-LSTM \cite{STALSTM}} & \rotatebox{90}{ GCA-LSTM \cite{GCALSTM}} & \rotatebox{90}{ VA-LSTM  \cite{VALSTM}} & \rotatebox{90}{DeepGRU  \cite{DeepGRU}} & \rotatebox{90} {Riemannian manifold trajectory\cite{RiemannianManifoldTraject}}  &  \rotatebox{90}{Our best TV SVM}   \\  
      \end{tabular}
\end{adjustbox}
 }
 \caption{Comparison against state of the art methods.}   \label{compare}            
\end{center}
\end{table}

\section{Conclusion}\label{section5}
We introduced in this paper a novel deep total variation support vector machine that learns highly effective classifiers. The strength of our method resides in its ability to models different kernels and to learn their support vectors   resulting into better classification performances. Experiments conducted on the challenging action recognition task show the outperformance of this parametric SVM formulation against different baselines including non-parametric SVMs as well as the related work. 
As a future work, we are currently investigating the application of our method to other computer vision and pattern recognition tasks in order to further study the impact of this highly flexible model. 
\section*{Appendix} 

We consider, as $g$ in Eq.~(\ref{eq0}), the  leaky ReLU~\cite{softplus1,leaky}  activation function:  leaky ReLU  allows  learning {\it conditionally} positive definite (c.p.d) kernels. In what follows, we discuss  the sufficient conditions about the choices of the  input kernels, the parameters  $\{\w_{q,p}^{l-1}\}$ and the activation function that guarantee this c.p.d property.  
\begin{definition}[c.p.d kernels]\label{def0} 
A kernel $\kappa$ is c.p.d, iff $\forall {\bf x}_1,\dots,{\bf x}_n \in \mathbb{R}^p$, $\forall c_1,\dots, c_n \in \mathbb{R}$ (with $\sum_{i=1}^n c_i = 0$), we have $\sum_{i,j} c_i c_j \kappa({\bf x}_i,{\bf x}_j) \geq 0$.
\end{definition} 
From the above definition, it is clear that any p.d kernel is also c.p.d, but the converse is not true; this property is a weaker (but sufficient) condition in order to learn max margin SVMs~(see for instance \cite{ShaweTaylor2004}; see also the following proposition).
\begin{proposition}[Berg et al.\cite{berg84}]\label{prop1}
 Consider $\kappa$ and define $\hat{\kappa}$ with
\begin{eqnarray*}\label{eq0101}
  \hat{\kappa}(\x_i,\x_j)&=& {\kappa}(\x_i,\x_j) - {\kappa}(\x_i,\x_{n+1}) \\ 
                                     &  & -  {\kappa}(\x_{n+1},\x_j) +   {\kappa}(\x_{n+1},\x_{n+1})
\end{eqnarray*}
Then, $\hat{\kappa}$ is positive definite if and only if  ${\kappa}$ is c.p.d.
\end{proposition}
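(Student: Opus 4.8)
The plan is to deduce both implications from a single algebraic identity relating the quadratic form of $\hat{\kappa}$ over $n$ points to the quadratic form of $\kappa$ over the augmented collection $\x_1,\dots,\x_{n+1}$, where $\x_{n+1}$ is the fixed reference point appearing in the definition of $\hat{\kappa}$. Concretely, for arbitrary reals $c_1,\dots,c_n$ I would set $s=\sum_{i=1}^n c_i$ and expand the definition of $\hat{\kappa}$ term by term (using the symmetry of $\kappa$) to obtain
\begin{multline*}
\sum_{i,j=1}^{n} c_i c_j\,\hat{\kappa}(\x_i,\x_j) = \sum_{i,j=1}^n c_i c_j\,\kappa(\x_i,\x_j) - s\sum_{i=1}^n c_i\,\kappa(\x_i,\x_{n+1}) \\ - s\sum_{j=1}^n c_j\,\kappa(\x_{n+1},\x_j) + s^2\,\kappa(\x_{n+1},\x_{n+1}).
\end{multline*}
Introducing the extra coefficient $c_{n+1}:=-s$, so that $\sum_{i=1}^{n+1} c_i=0$, the right-hand side collapses exactly to $\sum_{i,j=1}^{n+1} c_i c_j\,\kappa(\x_i,\x_j)$; that is, the $n$-point $\hat{\kappa}$-form coincides with the $(n+1)$-point $\kappa$-form evaluated at the zero-sum vector $(c_1,\dots,c_n,-s)$.

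Both directions then follow immediately. If $\kappa$ is c.p.d., then for any $\x_1,\dots,\x_n$ and any $c_1,\dots,c_n$ the augmented vector $(c_1,\dots,c_n,-s)$ has zero sum, hence by Definition~\ref{def0} the right-hand side is $\geq 0$, so $\sum_{i,j} c_i c_j\,\hat{\kappa}(\x_i,\x_j)\geq 0$ and $\hat{\kappa}$ is positive (semi-)definite. Conversely, if $\hat{\kappa}$ is positive definite, take any $\x_1,\dots,\x_n$ and any $c_1,\dots,c_n$ with $\sum_{i=1}^n c_i=0$, i.e. $s=0$; then the three correction terms in the identity vanish, leaving $\sum_{i,j=1}^n c_i c_j\,\kappa(\x_i,\x_j)=\sum_{i,j=1}^n c_i c_j\,\hat{\kappa}(\x_i,\x_j)\geq 0$, so $\kappa$ is c.p.d.

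As for obstacles, the argument is entirely elementary once the identity is in place, so the only real effort is the bookkeeping in that expansion — keeping straight which index runs free versus which is pinned to the anchor $\x_{n+1}$, and verifying that the collapse to the $(n+1)$-point form is exact with no residual terms. I would also flag two standing conventions at the outset so the statement is unambiguous: that $\kappa$ is assumed symmetric (otherwise the quadratic-form notion of positive definiteness is ill-posed), and that "positive definite" is read as "positive semi-definite" throughout, consistently with the rest of the paper; neither affects the computation. Finally, I would note that $\x_{n+1}$ is fixed once and for all in the definition of $\hat{\kappa}$ and that the equivalence does not depend on its particular choice, which is precisely what licenses using $\hat{\kappa}$ as the positive definite surrogate of a c.p.d.\ kernel built through Eq.~\ref{eq0}.
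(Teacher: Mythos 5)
Your proof is correct and complete: the identity expressing the $n$-point quadratic form of $\hat{\kappa}$ as the $(n+1)$-point form of $\kappa$ at the zero-sum vector $(c_1,\dots,c_n,-s)$ is exactly right, and both implications follow from it as you describe. The paper itself gives no argument for this proposition --- it simply defers to Berg et al. --- and what you have written is precisely the standard proof from that reference, so you have in effect supplied the step the paper leaves to the citation; your side remarks (symmetry of $\kappa$, reading ``positive definite'' as positive semi-definite, independence of the choice of anchor $\x_{n+1}$) are all appropriate and consistent with the paper's conventions.
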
 
\begin{proof} See Berg et al.\cite{berg84}. Now we derive our main result \end{proof} 
\begin{proposition}\label{prop0}
Provided that the input kernels $\{\kappa_q^{(1)}\}_{q}$ are c.p.d, and  $\{\w_{q,p}^{l-1}\}_{p,q,l}$ belong to the positive orthant of the parameter space; any combination  defined as  $g(\sum_q {\w}_{q,p}^{l-1} \ \kappa_q^{l-1})$ with  $g$ equal to leaky ReLU is also c.p.d.
\end{proposition}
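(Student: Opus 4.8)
The plan is to induct on the layer index $l$ and reduce the statement to two closure properties of the cone of c.p.d.\ kernels. For $l=1$ the input kernels $\{\kappa_q^{(1)}\}_q$ are c.p.d.\ by hypothesis; assuming all kernels of layer $l-1$ are c.p.d., every kernel of layer $l$ has the form $\kappa_p^{l}=g\big(\sum_q \w_{q,p}^{l-1}\,\kappa_q^{l-1}\big)$, so it is enough to establish \textbf{(A)} that any non-negative linear combination of c.p.d.\ kernels is c.p.d., and \textbf{(B)} that the leaky-ReLU $g$ of a c.p.d.\ kernel is c.p.d. Chaining (A) then (B) closes the induction and, in particular, proves the claim for $\kappa_1^{L}$.

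Step (A) is immediate from Definition~\ref{def0}: for $\x_1,\dots,\x_n$ and $c_1,\dots,c_n$ with $\sum_i c_i=0$, linearity gives $\sum_{i,j} c_i c_j \sum_q \w_{q,p}^{l-1}\kappa_q^{l-1}(\x_i,\x_j)=\sum_q \w_{q,p}^{l-1}\big(\sum_{i,j} c_i c_j \kappa_q^{l-1}(\x_i,\x_j)\big)\ge 0$, since each weight $\w_{q,p}^{l-1}\ge 0$ (the positive-orthant hypothesis) and each bracketed sum is $\ge 0$ (induction hypothesis). Hence $K:=\sum_q \w_{q,p}^{l-1}\kappa_q^{l-1}$ is c.p.d.

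For Step (B), write the leaky-ReLU with negative slope $a\in(0,1)$ as $g(t)=a\,t+(1-a)\max(t,0)$. The term $a\,K$ is c.p.d.\ by Step (A), so the whole question reduces to showing that the entrywise positive part $\max(K,0)$ of a c.p.d.\ kernel $K$ is c.p.d. The route I would follow is soft-plus smoothing: $\max(t,0)=\lim_{s\to\infty}\frac1s\log(1+e^{st})$ uniformly in $t$, and the cone of c.p.d.\ kernels is closed under pointwise limits, so it suffices to prove that $\log(1+e^{sK})$ is c.p.d.\ for each fixed $s>0$ (the factor $1/s$ being harmless by Step (A)). Here one invokes the Schoenberg correspondence: $K$ c.p.d.\ means $-K$ is negative definite, hence $e^{sK}$ is p.d.\ for all $s>0$, and since $(e^{sK})^{t}=e^{stK}$ this p.d.\ kernel is moreover infinitely divisible; the logarithm of a positive, infinitely divisible p.d.\ kernel is c.p.d. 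It then remains to carry this over from $e^{sK}$ to $1+e^{sK}$ and to take the limit $s\to\infty$, which would give that $\kappa_p^{l}=g(K)$ is c.p.d.

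The genuinely delicate point --- and the one I expect to absorb essentially all the effort --- is the last transfer in Step (B): we need $\log(1+e^{sK})$, not $\log(e^{sK})$, to be c.p.d., and adding the constant (p.d.) kernel $1$ need not preserve infinite divisibility, so the soft-plus argument does not close on its own. The tempting shortcut $\max(t,0)=\int_0^\infty \mathbf{1}[t>u]\,du$ fails for the same underlying reason, because the threshold kernels $\mathbf{1}[K>u]$ of a c.p.d.\ kernel $K$ --- e.g.\ the ball/cap indicators obtained from $-\|\x-\z\|^2$ or $\langle\x,\z\rangle$ --- are in general not c.p.d. Making Step (B) rigorous therefore seems to require either a direct spectral estimate on the soft-plus-smoothed Gram matrices restricted to the hyperplane $\{\sum_i c_i=0\}$, or a mild strengthening of the hypotheses (for instance, non-negativity of the input kernel values) under which $\max(\cdot,0)$ is seen to preserve conditional positive definiteness.
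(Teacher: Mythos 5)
Your Step (A) coincides with the first (omitted, recursion-based) part of the paper's proof, and your reduction of Step (B) to the conditional positive definiteness of $\log(1+\exp(c\,\kappa))$ for a c.p.d.\ kernel $\kappa$ and $c>0$ lands on exactly the same expression as the paper's Eq.~(\ref{eq1112}). However, your argument does not close, and you say so yourself: this is a genuine gap, not a presentational one. Two remarks on where you diverge from the paper. First, you take $g$ to be the exact piecewise-linear leaky ReLU $g(t)=a\,t+(1-a)\max(t,0)$ and introduce an extra limit $s\to\infty$ over soft-plus approximants; the paper avoids this entirely by defining $g(\kappa)=\log(\exp(a\kappa)+\exp(\kappa))=a\kappa+\log(1+\exp((1-a)\kappa))$ from the outset, i.e., it proves the statement for a smooth surrogate of leaky ReLU rather than for $\max(a t,t)$ itself, so no limiting argument is needed (your observation that the c.p.d.\ cone is closed under pointwise limits would, in any case, let you pass from the surrogate to the exact function). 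Second, and more importantly, the step you could not complete --- showing that $(1+\exp((1-a)\kappa))^{\alpha}$ is positive definite for \emph{all} $\alpha>0$, so that \cite{Shoenberg38} yields the conditional positive definiteness of its logarithm --- is precisely the step the paper asserts without further justification after establishing (via Proposition~\ref{prop1}, exactly as you do) that $\exp(\kappa)$ is positive definite.

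So the crux you identified is the right one: positive definiteness of $\exp((1-a)\kappa)$ gives infinite divisibility of $\exp((1-a)\kappa)$ itself (its $\alpha$-th power is again an exponential of a c.p.d.\ kernel), but adding the constant kernel $1$ is only known to preserve positive definiteness of \emph{integer} powers (Schur products), not of fractional ones, and your worry that infinite divisibility may be lost under this addition is exactly the point that the paper's proof leaves unargued. To turn your proposal into a complete proof along the paper's lines you would need to supply that missing justification --- for instance by exhibiting $1+\exp((1-a)\kappa)$ as $\exp(\psi)$ with $\psi$ c.p.d., or by a direct estimate of $\sum_{i,j}c_ic_j\log(1+\exp((1-a)\kappa(\x_i,\x_j)))$ on the hyperplane $\{\sum_i c_i=0\}$, possibly under the mild strengthening of hypotheses you suggest. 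As it stands, your write-up is an honest and accurate diagnosis of the difficulty rather than a proof of the proposition.
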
 
{
\begin{proof}
  
  \noindent Details of the first part of the proof, based on recursion, are omitted and result from the application of definition~(\ref{def0}) to $\kappa=\sum_q \w_{q,p}^{l-1} \ \kappa_q^{l-1}$ (for different values of $l$) while considering $\{\kappa_q^{1}\}_q$ c.p.d. Now we show the second part of the proof (i.e., if $\kappa$ is c.p.d, then $g(\kappa)$ is also c.p.d for leaky ReLU).\\
\noindent  For leaky ReLU, one may write $g(\kappa)=\log(\exp(a \kappa) +\exp(\kappa))$ with $0<a\ll 1$. Considering $\kappa$ c.p.d, and following proposition~(\ref{prop1}), one may define a positive definite $\hat{\kappa}$ and obtain $\forall \{c_i\}_i$, $\forall \{\x_i\}_i$, $\sum_{i,j=1}^n c_i c_j \exp(\kappa(\x_i,\x_j)) =(*)$ with
{

\begin{eqnarray*}
(*)  & = &  \exp(\kappa(\x_{n+1},\x_{n+1}))  \\ 
      &    &  . \sum_{i,j=1}^n (c_i \exp(\kappa(\x_i,\x_{n+1}))) .  (c_j \exp(\kappa(\x_{n+1},\x_j)))  \\
       &   &   \ \ \ \ \ \ \ \ \ \ \ \  . \exp(\hat{\kappa}(\x_i,\x_j))\\
  &\geq &0, 
 \end{eqnarray*}
 }
so $\exp(\kappa)$ is also positive definite. One may also rewrite $g$ as
\begin{eqnarray}\label{eq1112}
g(\kappa)=a \ \kappa + \log(1 +\exp((1-a) \ \kappa)).
\end{eqnarray}
Since $\exp(\kappa)$ is positive definite, it follows that $(1+\exp((1-a) \ \kappa))^{\alpha}$ is also positive definite for any arbitrary $\alpha >0$ and $0<a\ll1$  so from \cite{Shoenberg38}, $\log(1 +\exp((1-a) \ \kappa))$ is c.p.d and so is $g(\kappa)$; the latter results from the closure of the c.p.d with respect to the sum.   
\begin{flushright}$\blacksquare$ \end{flushright}
\end{proof}  
}

\end{document}